\documentclass[conference,10pt]{IEEEtran}
\IEEEoverridecommandlockouts
\usepackage[T1]{fontenc}
\usepackage{type1cm}
\usepackage{url}
\usepackage{fix-cm}
\usepackage{graphicx,color,comment}
\usepackage{algpseudocode}
\usepackage[table]{xcolor}
\usepackage{amsmath, amsthm, amsfonts, amssymb, amsbsy,nccmath}
\usepackage{algorithm}
\usepackage{enumerate}
\usepackage{lipsum}

\usepackage{textgreek}
\usepackage{comment}
\usepackage{textcomp}
\usepackage{tikz}
\usetikzlibrary{positioning, fit, backgrounds, arrows.meta, calc, shapes.geometric, decorations.pathreplacing}
\usepackage{booktabs}
\usetikzlibrary{positioning, arrows.meta, shapes.geometric, calc, decorations.pathreplacing}

\usepackage[sort,compress]{cite}
\usepackage{epsfig}
\usepackage{epstopdf}
\usepackage{mathtools}
\usepackage{dsfont}
\usepackage{cleveref}
\usepackage{subfig}
\usepackage{stfloats}
\usepackage{fontawesome5}

\theoremstyle{definition}
\newtheorem{definition}{Definition}[section]

\usepackage[inline]{enumitem}   
\makeatletter
\newcommand{\inlineitem}[1][]{%
\ifnum\enit@type=\tw@
    {\descriptionlabel{#1}}
  \hspace{\labelsep}%
\else
  \ifnum\enit@type=\z@
       \refstepcounter{\@listctr}\fi
    \quad\@itemlabel\hspace{\labelsep}%
\fi}
\makeatother

\newtheorem{theorem}{Theorem}

\usepackage{float}
\floatstyle{plaintop}
\restylefloat{table}
\newtheorem{corollary}{Corollary}[theorem]

\makeatletter
\newcommand\fs@spaceruled{\def\@fs@cfont{\bfseries}\let\@fs@capt\floatc@ruled
  \def\@fs@pre{\vspace{0.5\baselineskip}\hrule height.8pt depth0pt \kern2pt}%
  \def\@fs@post{\kern1pt\hrule\relax}%
  \def\@fs@mid{\kern2pt\hrule\kern2pt}%
  \let\@fs@iftopcapt\iftrue}
  \newcommand\fs@betterruled{%
  \def\@fs@cfont{\bfseries}\let\@fs@capt\floatc@ruled
  \def\@fs@pre{\vspace*{5pt}\hrule height.8pt depth0pt \kern2pt}%
  \def\@fs@post{\kern2pt\hrule\relax}%
  \def\@fs@mid{\kern2pt\hrule\kern2pt}%
  \let\@fs@iftopcapt\iftrue}
\floatstyle{betterruled}
\restylefloat{algorithm}
\makeatother

\usepackage{footnote}

\begin{document}

\title{Not All Symbols Are Equal: Importance-Aware Constellation Design for Semantic Communication\vspace{-4mm}}
\vspace{-6mm}
\author{ Albert Shaju\IEEEauthorrefmark{1}, Christo Kurisummoottil Thomas\IEEEauthorrefmark{1},  and Mayukh Roy Chowdhury\IEEEauthorrefmark{2}\\  \IEEEauthorrefmark{1} Department of Electrical and Computer Engineering, Worcester Polytechnic Institute, Worcester, MA, USA.\\  
\IEEEauthorrefmark{2} Nokia Bell Labs, Bengaluru, India.\\
Emails:ashaju@wpi.edu,cthomas2@wpi.edu,mayukh.roy\_chowdhury@nokia-bell-labs.com\ \vspace{-0.5cm}\thanks{This research was supported by AI-RAN Alliance innovation fund. }}
\maketitle
\vspace{-4mm}
\begin{abstract}
Semantic communication systems for goal-oriented
transmission must protect task-relevant information
not only through source compression but also via
 physical layer mapping. Existing approaches decouple constellation design and semantic encoding, exposing critical symbols to channel errors at the same rate as irrelevant ones. Contrary to this, in this paper, a joint
semantic-physical layer framework is proposed, which is composed of a
vector quantized-variational autoencoder that
extracts discrete latent concepts, a semantic
criticality indicator (SCI) that scores each concept
by task relevance, and a deep reinforcement learning
agent that dynamically selects the transmission
subset based on instantaneous channel conditions.
At the physical layer, a learned semantic-aware
$M$-QAM constellation assigns symbol positions
according to joint co-occurrence statistics and SCI
scores, departing from the uniform spacing and Gray
coding of standard $M$-QAM which minimizes average
BER without regard for semantic content. We introduce a novel semantic symbol vulnerability
(SSV) metric and a semantic protection probability (SPP) to quantify the exposure of
task-critical symbols to decoding errors, and
prove that any Gray-coded constellation is
strictly suboptimal in SCI-Weighted SSV whenever
the source exhibits non-uniform semantic importance
and co-occurrence statistics. Simulation results
demonstrate that the proposed constellation achieves
near $100\%$ SPP
across modulation orders from 4-QAM to 1024-QAM
versus $50\%$ for standard constellations at high
spectral efficiency, a $21$:$1$ compression ratio with semantic quality
above $0.9$, generalizing across MNIST,
Fashion-MNIST, and FSDD without modification.
\end{abstract}

\vspace{-2mm}\section{Introduction}
\vspace{-1mm}
The design of wireless communication systems has historically optimized bit-level fidelity, while being agnostic to the meaning carried by transmitted bits. While this is theoretically justified under Shannon's framework for general sources \cite{shannon1948mathematical}, it becomes bandwidth inefficient when the RX only intends to perform inference on the received messages. \emph{Semantic communication (SC)} \cite{kountouris2021semantics} and \cite{chaccour2024less} addresses this  by jointly optimizing source representation and transmission schemes to maximize downstream task performance rather than reconstruction fidelity.
Recent advances in deep learning have enabled end-to-end learned SC systems that directly map source data to transmitted signals. Vector quantized-variational autoencoders (VQ-VAE) \cite{vandenoord2017neural} produce compact discrete representations suitable for digital transmission, and reinforcement learning (RL) agents have demonstrated the ability to adaptively rate-control semantic payloads in response to channel state. However, these application layer level advances have not been matched by equivalent progress at the physical layer (PHY) modulation stage. Once semantic concepts are compressed and selected, they are typically either transmitted as continuous analog signals over learned channel mappings, foregoing compatibility with standard digital infrastructure, or mapped to Gray-coded constellations that optimize average bit error rate (BER) uniformly with no consideration for which symbols carry semantically critical information.

Parallel to SC, the authors of \cite{OShea2017DeepLearningPHY} and \cite{Stark2019JointConstellationShaping} pioneered the view of
a communication system as an autoencoder, jointly optimizing
transmitter (TX) and receiver (RX) for minimum block error rate and
demonstrating that learned constellations can outperform
standard QAM for specific channels.   However, all prior
learned constellation work \cite{OShea2017DeepLearningPHY,Stark2019JointConstellationShaping,Tung2022DeepJSCCQ} optimizes for average BER
uniformly across all symbols, with no mechanism to
differentiate protection or adapt information rates based on the semantic importance
of individual concept indices. Importance-aware transmission has been studied
at the resource allocation level, where deep reinforcement learning (DRL)-based
schemes adaptively assign bandwidth and
quantization bits based on semantic
relevance~\cite{Wang2024SemanticResourceAllocation} and \cite{Park2023SemanticProtocols6G},
and task-oriented rate control has been explored
using information bottleneck
principles~\cite{Shao2021TaskOrientedCommunication}.  Unequal error
protection for semantic features has been proposed via
proactive importance-ordered
restructuring~\cite{Zhan2026RobustSemanticCommunications}, prioritizing transmission
of critical features.  In contrast to these approaches, which operate
at the source or scheduling layer and leave the
physical constellation unchanged, our work is
the first to embed semantic importance directly
into the constellation assignment itself,
closing the gap between semantic-aware source
coding and PHY modulation.


The main contribution of this paper is a joint
semantic PHY framework that co-designs the
constellation assignment with the semantic importance
and statistical co-occurrence structure of the learned
concept vocabulary.  
First, we propose a \emph{learned
semantic-aware constellation mapper} whose complex
symbol coordinates are continuous trainable variables
optimized by an \emph{semantic criticality indicator (SCI)}-weighted loss, establishing a
direct one-to-one mapping between VQ-VAE concept
indices and physical symbols.
Second, we introduce
the SCI-weighted semantic symbol vulnerability (SSV), 
 and semantic  protection probability (SPP)
 as
novel metrics for quantifying the exposure of
task-critical symbols to decoding errors and
the degree to which a constellation
preferentially protects semantically important
concepts.  Third, we prove that any Gray-coded
$M$-QAM constellation is strictly suboptimal in
SCI-Weighted SSV whenever the source exhibits
non-uniform semantic importance and
co-occurrence statistics, characterize
the protection gap in closed form, and establish via
corollary the BER-semantic error decoupling observed
empirically. Finally, simulation results demonstrate consistently
higher semantic quality and compression than standard
$M$-QAM across all modulation orders, near $100\%$
SPPR up to 1024-QAM versus $50\%$ for Gray-coded
baselines, generalizing across visual and acoustic
domains without modification.

\vspace{-3mm}\section{Problem Formulation and System Model}\vspace{-2mm}
\label{System_Model}
\begin{figure}[t]
    \centering
    \resizebox{0.5\textwidth}{0.23\textheight}{
\begin{tikzpicture}[>=Stealth, thick]

\tikzstyle{iot} = [rectangle, draw=blue!80!black, thick, fill=blue!8, text width=2.8cm, align=center, minimum height=1.4cm, rounded corners=3pt, font=\small\bfseries]
\tikzstyle{server} = [rectangle, draw=purple!80!black, thick, fill=purple!8, text width=2.8cm, align=center, minimum height=1.4cm, rounded corners=3pt, font=\small\bfseries]
\tikzstyle{channel} = [rectangle, draw=gray!80!black, thick, fill=gray!15, text width=1.8cm, align=center, minimum height=1.4cm, rounded corners=3pt, font=\small\bfseries]
\tikzstyle{cb} = [rectangle, draw=gray!80!black, dashed, thick, fill=gray!5, text width=2.6cm, align=center, minimum height=0.8cm, rounded corners=3pt, font=\small\bfseries]

\node[iot] (img) at (0,0) {\faCube\ Data\\$\mathbf{x} \in \mathbb{R}^{H \times W \times C}$};
\node[iot] (enc) at (3.6,0) {\faCogs\ VQ-VAE\\Encoder + SCI};
\node[iot] (sel) at (7.8,0) {\faFilter\ Top-$K$ Concept\\Selector};

\node[iot] (i2b) at (12.0,0) {\faBarcode\ Concept Indices\\$\rightarrow$ $b$-bit};
\node[iot] (map) at (15.6,0) {\faBraille\ Semantic\\$M$-QAM Mapper};
\node[channel] (chan) at (19.4,0) {\faWaveSquare\ Channel\\(AWGN)};
\node[server] (demap) at (15.6,-4) {\faCrosshairs\ Semantic\\$M$-QAM Demapper};
\node[server] (b2i) at (12.0,-4) {\faList\ $b$-bit $\rightarrow$\\Concept Indices};

\node[server] (scat) at (7.8,-4) {\faTh\ Scatter \&\\Zero Fill};
\node[server] (dec) at (3.6,-4) {\faMagic\ VQ-VAE\\Decoder};
\node[server, text width=2.6cm] (out) at (0,-4) {\faCube\ Reconstructed\\Data $\hat{\mathbf{x}}$};

\node[iot] (rl) at (7.8, 2.2) {\faBrain\ RL DQN\\(Rate Controller)};
\node[server, text width=2.6cm] (task) at (0, -6.5) {\faRobot\ Task Classifier};
\node[font=\bfseries] (sem) at (0, -8.2) {Semantics};

\node[cb] (codebook) at (3.6, -2) {\faBook\ Shared\\Codebook $\mathcal{C}$};

\node[draw=teal!80!black, thick, fill=white, inner sep=4pt, font=\small] (eq) at (15.6, 2.2) {$\displaystyle l = \frac{b K}{\log_2 M} \equiv K$};

\draw[->] (img) -- (enc);
\draw[->] ([yshift=2mm]enc.east) -- node[above, font=\scriptsize] {${Z}^N$} ([yshift=2mm]sel.west);
\draw[->] ([yshift=-2mm]enc.east) -- node[below, font=\scriptsize] {SCI} ([yshift=-2mm]sel.west);
\draw[->, blue!80!black] (sel.east) -- node[above, font=\scriptsize, text=black] {${Z}^k$} (i2b.west);

\draw[->, blue!80!black] (i2b.east) -- node[above, font=\scriptsize, text=black] {$B^{bK}$} (map.west);
\draw[->, blue!80!black] (map.east) -- node[above, font=\scriptsize, text=black] {$X^l$} (chan.west);
\draw[->, gray!80!black, rounded corners=4pt] (chan.south) |- node[pos=0.75, above, font=\scriptsize, text=black] {$\hat{Y}^l$} (demap.east);
\draw[->, purple!80!black] (demap.west) -- node[above, font=\scriptsize, text=black] {$\hat{B}^{bK}$} (b2i.east);
\draw[->, purple!80!black] (b2i.west) -- node[above, font=\scriptsize, text=black] {$\hat{{Z}}_k$} (scat.east);

\draw[->] (scat.west) -- node[above, font=\scriptsize] {$\hat{{Z}}^N$} (dec.east);
\draw[->] (dec.west) -- (out.east);
\draw[->] (out.south) -- (task.north);
\draw[->] (task.south) -- (sem.north);

\draw[<-, gray] (rl.north) -- ++(0,0.5) node[above, font=\scriptsize\bfseries, text=black] {Channel SNR};
\draw[->, blue!80!black] (rl.south) -- node[right, font=\scriptsize\bfseries] {$K$} (sel.north);

\draw[->, dashed, red!70!black] (sel.south) -- node[left, font=\scriptsize\bfseries, align=right, text=red!70!black] {Overhead\\Pos. IDs \& $K$} (scat.north);

\draw[<->, dashed, gray!80!black] (codebook.north) -- (enc.south);
\draw[<->, dashed, gray!80!black] (codebook.south) -- (dec.north);

\draw[->, dashed, teal!80!black] (map.north) -- (eq.south);

\begin{scope}[on background layer]
    \filldraw[fill=gray!3, draw=gray!40, dashed, thick, rounded corners] (-1.8, -1.0) rectangle (9.6, 3.6);
    \node[anchor=north west, font=\large\bfseries\color{black!70}] at (-1.7, 3.4) {1. Transmitter Control \& Extraction};

    \filldraw[fill=gray!3, draw=gray!40, dashed, thick, rounded corners] (-1.8, -8.8) rectangle (9.6, -2.6);
    \node[anchor=south east, font=\large\bfseries\color{black!70}] at (9.5, -7.5) {3. Receiver Reconstruction \& Inference};

    \filldraw[fill=teal!3, draw=teal!40, dashed, thick, rounded corners] (10.2, -5.0) rectangle (20.8, 3.6);
    \node[anchor=north east, font=\large\bfseries\color{black!70}] at (18.5, 3.4) {2. Semantic PHY Layer};
\end{scope}

\node[draw=gray!50, thick, fill=white, rounded corners=3pt, inner sep=6pt, anchor=south east] at (20.8, -8.8) {
    \begin{tabular}{ll}
        \multicolumn{2}{l}{\textbf{Physical Location}} \\[1mm]
        \textcolor{blue!80!black}{\faSquare} & IoT Node (Transmitter) \\
        \textcolor{purple!80!black}{\faSquare} & Edge Server (Receiver) \\
    \end{tabular}
};

\end{tikzpicture}
    }
   \vspace{-5mm} \caption{Overall Architecture of the proposed  SC system.}
    \label{fig:main_architecture}
\vspace{-8mm}\end{figure}
Consider a resource-constrained sensor device, such as a low-power IoT node deployed in an industrial or smart city environment, that acquires multidimensional data $\boldsymbol{x} \in \mathbb{R}^{H \times W \times C}$, where $H$, $W$, and $C$ denote the spatial and feature channel dimensions of the observation, respectively. The node transmits this data over a bandwidth-limited wireless channel to an edge server. The edge server then utilizes a hosted neural inference engine to perform a downstream inference task $\mathcal{T}$ on the reconstructed signal.
Unlike a conventional communication link where both endpoints share
reconstruction as the objective, the sensor has no inference capability
and transmits solely to enable accurate task execution at the server.
The sensor operates under a strict transmit power budget $P$ and must
minimize the number of transmitted physical symbols $\ell$ representing $\boldsymbol{x}$ to reduce
both bandwidth consumption and over the air latency. The edge server
periodically feeds back the estimated channel SNR to the sensor over
a reliable low-rate control channel, enabling adaptive payload
selection. We model this feedback as error-free and instantaneous,
consistent with standard assumptions in the adaptive modulation
literature~\cite{goldsmith1998adaptive}. The fundamental challenge
is therefore to compress the source data into the minimum physical
symbols that preserves the semantic content required for accurate
inference at the server. This should ensure that the most task-critical
features are physically protected against channel impairments at the
modulation layer. 
Crucially, our architecture maintains structural compatibility with conventional PHY setups, enabling seamless integration into existing systems. The architecture of the proposed end-to-end SC system, as shown in Fig.~\ref{fig:main_architecture}, are discussed next.
\vspace{-2mm}\subsection{SC Model}
\vspace{-0mm}\subsubsection{Semantic Extraction and Control}\vspace{-2mm}

At the TX, a VQ-VAE encoder is adopted because the variational prior regularizes the encoder output distribution ensuring stable and semantically coherent latent representations, while vector quantization produces a learned discrete codebook $\mathcal{C}$ whose entries are jointly optimized with the encoder and downstream task loss. The encoder maps the input $\boldsymbol{x}$ to $N$ continuous embeddings $\boldsymbol{z}_{i} \in \mathbb{R}^D$, defined collectively as $\boldsymbol{Z} \in \mathbb{R}^{N\times D}$. Each continuous embedding is quantized into a discrete latent vector $\boldsymbol{z}_{q,i}$ (collectively as $\boldsymbol{Z}_q$) via a nearest-neighbor lookup in $\mathcal{C}$:
$\boldsymbol{z}_{q,i} = \boldsymbol{e}_{k^*}, \quad \text{where} \quad
  k^* = \operatornamewithlimits{argmin}\limits_{k, \textrm{s.t.}\,\,e_k\in \mathcal{C} }
  \|\boldsymbol{z}_{i} - \boldsymbol{e}_k\|,
$
producing the full discrete concept vector $Z^N \in \mathbb{Z}^N$. A SCI network operates in parallel,
assigning each concept a continuous SCI score
$I_i \in (0,1)$. A DRL
agent, via a deep Q-network (DQN), observes the SNR fed back from the edge server
and selects the optimal transmission subset size $K \leq N$. Top-$K$ concepts and positional IDs are extracted to enable adaptive transmission of essential semantics.
\vspace{-0mm}\subsubsection{Semantic PHY}\vspace{-1mm}

The $K$ selected concept indices are modulated using a learned
semantic $M$-QAM constellation $\mathcal{X} = \{x_1,\ldots,x_M\}
\subset \mathbb{C}$, where $M = |\mathcal{C}|$ and $x_i$ is the constellation symbol. Setting the
codebook size equal to the modulation order establishes a
direct one-to-one mapping between each $b = \log_2 M$-bit
concept index and a distinct physical symbol. The constellation coordinates are trainable variables
subject to the average power constraint:
$
  \frac{1}{M}\sum_{i=1}^{M}|x_i|^2 \leq P
$.  The resulting $\ell = K$ symbols, defined as $X^l$
traverse a wireless channel, yielding
received symbols ${Y}^l$. 
\subsubsection{Semantic Reconstruction and Task Execution}

At the edge server, the semantic $M$-QAM demapper recovers
the $K$ concept indices $\hat{i}$ from the noisy received symbol
$\hat{{y}}$ via minimum-distance detection: $
  \hat{i} = \operatornamewithlimits{argmin}\limits_{i \in \{1,\ldots,M\}}
  \|\hat{y} - x_i\|^2.$
%
At the RX, using the shared codebook $\mathcal{C}$ and the control-path Positional IDs, the recovered vectors are scattered back to their original spatial coordinates within an $N$-slot grid. The remaining $N-K$ unselected positions are zero-filled to yield the reconstructed $\hat{\boldsymbol{{Z}}}$. The VQ-VAE decoder
 reconstructs $\hat{\boldsymbol{x}}$ from
$\hat{\boldsymbol{{Z}}}$ and a frozen task classifier $\mathcal{T}$ evaluates
the downstream inference on the reconstructed
image. 
\vspace{-1mm}\section{Proposed AI Architecture for Semantic PHY}


\vspace{-1mm}\subsection{Task-Specific Differentiable Classifier}\vspace{-1mm}

To evaluate semantic quality, we pre-train a lightweight
multi-layer perceptron (MLP) classifier $\mathcal{T}$ on
clean source images. The network maps the flattened
data ($\boldsymbol{x}$) vector to a class probability distribution
$\hat{\boldsymbol{p}} \in \mathbb{R}^{|\mathcal{Y}|}$ via two
fully connected hidden layers with ReLU activations and
dropout regularization, trained to minimize the sparse
categorical cross-entropy loss:
$
  \mathcal{L}_{\mathrm{cls}}(y, \hat{\boldsymbol{p}}) = -\log \hat{p}_y
$, where $\hat{p}_y = p(\mathcal{T}(\hat{\boldsymbol{x}}) = y)$. 

\vspace{-2mm}\subsection{SCI-Weighted VQ-VAE and SCI}\vspace{-2mm}

The core compression engine is an SCI-weighted
VQ-VAE (S-VQ-VAE) that jointly learns discrete semantic
representations and their task relevance. The SCI network is implemented as a two-layer MLP with sigmoid
output and requires no explicit importance supervision.
Its parameters are trained end-to-end via gradients
of $\mathcal{L}_{\mathrm{cls}}$ that backpropagate
through the decoder and across the quantization step
via the straight-through estimator (STE) \cite{Bengio2013StochasticNeurons}, implicitly forcing higher scores onto
concepts whose presence improves downstream
classification accuracy. 

\subsubsection{SCI-Weighted Forward Pass}

For a given $K$, each
continuous embedding $\boldsymbol{z}_{i}$ is scaled
by its normalized SCI weight via element-wise
multiplication, 
%
%
High-importance concepts (i.e., concepts with high SCI score) receive near-unit weights
and survive codebook quantization faithfully, while
low-importance concepts are attenuated toward zero
and effectively suppressed.
Here, we use a temperature parameter $\tau$ to
progressively harden the soft selection into a discrete
Top-$K$ mask at inference. During inference the soft
SCI weighting is replaced by hard
Top-$K$ masking.

\begin{table}[t]
\centering
\caption{System Hyperparameters and Simulation Setup}\vspace{-2mm}
\label{tab:hyper}
\renewcommand{\arraystretch}{0.9}
\scriptsize 
\begin{tabular}{ll|ll}
\toprule
\textbf{Parameter} & \textbf{Value} & \textbf{Parameter} & \textbf{Value} \\
\midrule
\rowcolor{gray!10} \multicolumn{4}{l}{\textit{S-VQ-VAE \& SCI Network}} \\
$D$ (Latent Dim)  & 64             & $N$ (Concept Slots) & 64             \\
Hidden Units      & [512, 256]     & Batch Size          & 128            \\
$\beta$ (Commit)  & 0.25           & $\tau$ (Temp.)      & $1.0 \to 0.1$  \\
$\lambda_{\text{sem}}$ & 1.0       & Learning Rate       & $10^{-3}$      \\
\midrule
\rowcolor{gray!10} \multicolumn{4}{l}{\textit{DQN Rate Controller}} \\
State Dim         & 2              & $|\mathcal{A}|$ (Actions) & 12             \\
Hidden Units      & [64, 64]       & $\gamma$ (Gamma)    & 0.99           \\
Buffer Size       & $10^4$         & $\epsilon$ (Epsilon) & $1.0 \to 0.01$ \\
$K_{\min}, K_{\max}$ & 5, 64        & Bonus $\alpha$      & 0.2            \\
\midrule
\rowcolor{gray!10} \multicolumn{4}{l}{\textit{Semantic PHY \& Channel}} \\
$M$-QAM Orders    & $\{4 \dots 1024\}$ & Training Steps    & 4000           \\
Init. Grid        & Rect. QAM      & SNR Range           & $[-10, 20]$\,dB \\
Optimizer         & Adam           & Learning rate (PHY)            & $10^{-3}$      \\
\bottomrule
\end{tabular}
\vspace{-6mm} 
\end{table}

\subsubsection{Two-Phase Training}

Training proceeds in two phases to ensure stable
convergence. In \emph{Phase~1}, which is the representation
learning, the model trains as a standard VQ-VAE to
establish robust reconstruction capability:
$ \mathcal{L}_{\mathrm{VQ}}
  = \mathbb{E}\!\left[\|\boldsymbol{x} - \hat{\boldsymbol{x}}
    \|_2^2\right]
  + \beta\,\|\mathrm{sg}[\boldsymbol{Z}_{q}]
    - \boldsymbol{Z}\|_2^2,$
where $\mathrm{sg}[\cdot]$ denotes the stop-gradient
operator and $\beta$ is the commitment loss
weight. The codebook entries are updated via the exponential
moving average of assigned encoder outputs, following
the standard VQ-VAE training procedure~\cite{vandenoord2017neural}.
 In \emph{Phase~2}, called semantic activation, the
downstream task loss is activated, reorganizing the
codebook geometry around task-relevant features:
\vspace{-2mm}\begin{equation}
  \mathcal{L}_{\mathrm{total}}
  = \mathcal{L}_{\mathrm{VQ}}
  + \lambda_{\mathrm{sem}}\,
    \mathcal{L}_{\mathrm{cls}}(y, \hat{\boldsymbol{x}}),
  \label{eq:total_loss}
\vspace{-1mm}\end{equation}
where $\lambda_{\mathrm{sem}}$ is the semantic
loss weight. Because the
importance weights are applied before quantization,
gradients from $\mathcal{L}_{\mathrm{cls}}$ backpropagate
through the decoder, across the quantization step via
the STE, and directly into both the SCI network and the
codebook entries, forcing the learned discrete
vocabulary to concentrate task-relevant structure
into a small subset of actively used codewords. 

\vspace{-2mm}\subsection{Deep Reinforcement Learning Rate Controller}\vspace{-1mm}

Adaptive semantic concept selection is formulated as a Markov
decision process (MDP) solved by a deep Q-network (DQN)
agent. At each transmission interval $t$, the agent
observes state $s_t = \mathrm{SNR}_{n}$,
the channel SNR normalized to $[0,1]$ from the feedback
path described in Section~\ref{System_Model}, and selects
action $a_t \equiv K \in \mathcal{A}$, where $\mathcal{A}$
is a discrete set of $|\mathcal{A}|$ uniformly spaced
values spanning $[K_{\min}, K_{\max}]$.
The reward function is a multi-objective formulation
balancing \emph{semantic quality, bandwidth efficiency, and
PHY reliability}:
\vspace{-2mm}\begin{equation}
  r_t = Q_{\mathrm{task}}
      + \mathcal{B}_{\mathrm{comp}}
      + \mathcal{B}_{a}
      - \lambda\,P_e
      - \mathcal{P}(K),
  \label{eq:reward}
\vspace{-1mm}\end{equation}
where $Q_{\mathrm{task}} \in [0,1]$ is the downstream
task accuracy,   $P_e$ the BER, and $\mathcal{P}(K)$ is a
regularization term that
penalizes selection of extreme payload sizes
$K \in \{K_{\min}, K_{\max}\}$, preventing the
agent from collapsing to a degenerate policy that
ignores channel conditions.
Two conditional bonuses guide exploration.
$\mathcal{B}_{\mathrm{comp}} = \alpha \ln(N/K)$
if $Q_{\mathrm{task}} > Q_0$ and zero otherwise,
rewarding compression only when semantic quality
is preserved, and $\mathcal{B}_{a}$ incentivizes
fewer concepts at high SNR and more at low SNR
to enforce channel-adaptive behavior.  The DQN employs
an MLP policy network with $\epsilon$-greedy exploration,
experience replay, and a periodically synchronized target
network for Bellman stability. 
\vspace{-2mm}\subsection{Learned Semantic $M$-QAM Constellation}


\vspace{-1mm}\subsubsection{SCI-Weighted Constellation Loss}

Unlike standard QAM, which minimizes average BER without regard to semantics, the proposed mapper optimizes an SCI-weighted loss that penalizes errors on critical symbols more severely:
\vspace{-2mm}\begin{equation}
  \mathcal{L}_{\mathrm{QAM}} = -\frac{1}{N_{\mathrm{sym}}}
  \sum_{j=1}^{N_{\mathrm{sym}}} I_j
  \log\!\left(
    \frac{\exp(-\|\hat{y}_j - x_{y_j}\|^2 / N_0)}
         {\sum_{i=1}^{M}\exp(-\|\hat{y}_j -
         x_i\|^2 / N_0)}
  \right),
  \label{eq:qam_loss}
\vspace{-2mm}\end{equation}
where $\hat{y}_j$ is the received noisy symbol
for the $j$-th transmission, $x_{y_j}$ is the
 constellation point for true concept
index $y_j$, $N_0$ is the noise variance, and $I_j$ is the
normalized SCI score of the $j$-th transmitted
symbol.
 By scaling the cross-entropy by $I_j$,
the optimizer assigns disproportionately large gradient
penalties to decoding errors on high-importance symbols,
driving their constellation points toward regions of
maximum physical separation from co-occurring neighbors.
Training SNR is
randomized across $[\mathrm{SNR}_{\min},
\mathrm{SNR}_{\max}]$ at every step to ensure robustness
across diverse channel conditions.

\vspace{-1mm}\subsubsection{Evolution of the Semantic Aware Constellation}
To empirically validate this geometric adaptation, we visualize the optimization trajectory of a 256-QAM constellation for MNIST data  in Fig.~\ref{fig:256_QAM_semantic_constil_evolution}.
\begin{figure}[t]
    \centering
      \includegraphics[width=1\linewidth]{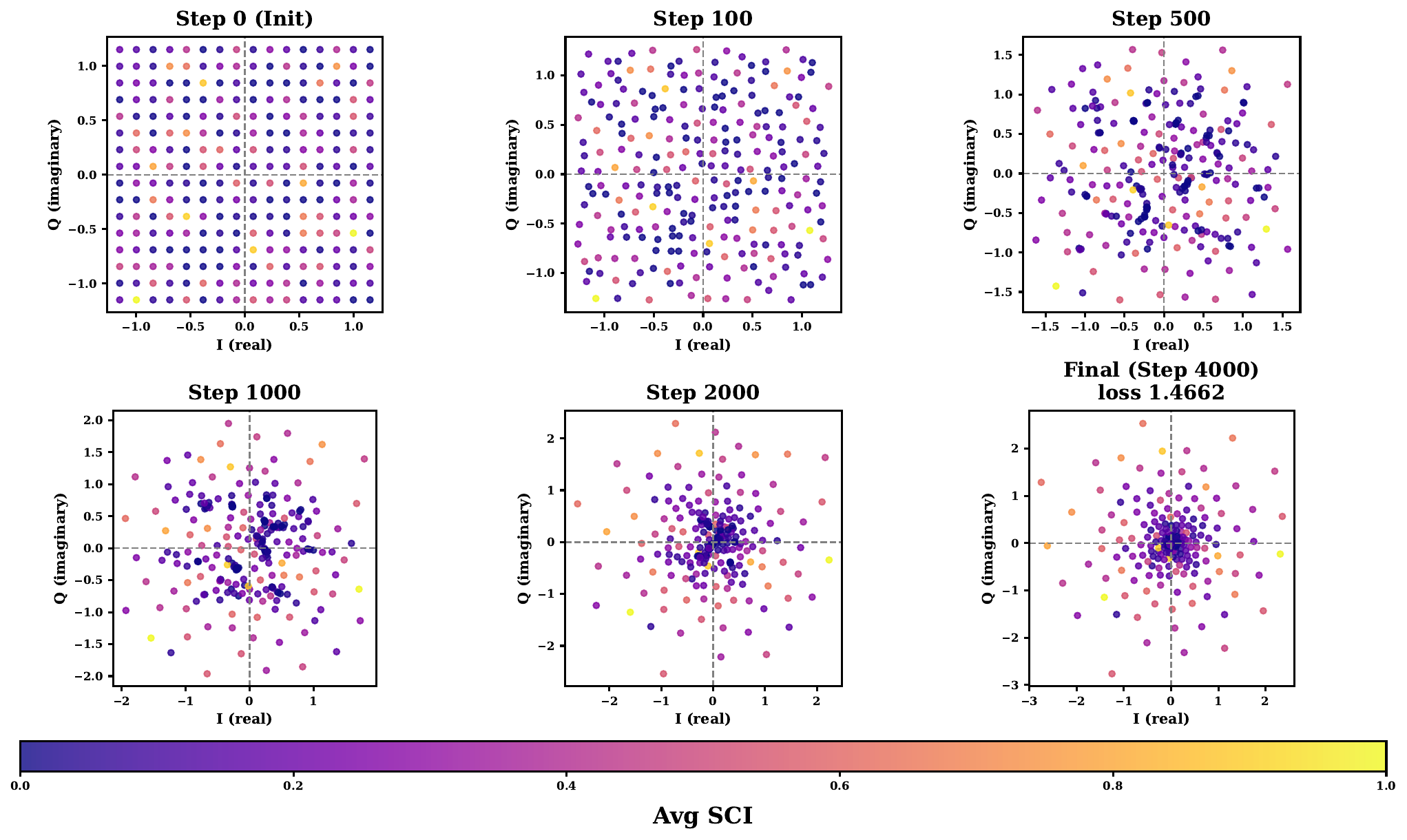}
   \vspace{-7mm} \caption{\small Evolution of semantic 256-QAM constellation. }
    \label{fig:256_QAM_semantic_constil_evolution}
 \vspace{-7mm}\end{figure}
The constellation initializes as a standard
rectangular grid and evolves under the
SCI-weighted loss into a task-aware geometry.
Symbols carrying low-importance concepts
naturally cluster in the dense interior of
the I/Q plane, sacrificing their decodability
to recover geometric space within the power
constraint.
Consequently, semantically critical symbols
are pushed toward the outer perimeter,
maximizing their physical separation and
directly validating the spatial isolation
strategy proved in Theorem~\ref{thm:main}. 
As eviden from Fig.~\ref{fig:256_QAM_semantic_constil_evolution}, only a distinct subset of physical symbols carries high semantic weight for any given class. This sparse physical importance matrix acts as the driving mathematical force behind the geometric evolution observed in Fig.~\ref{fig:256_QAM_semantic_constil_evolution}, pushing these specific high-value symbols toward the noise-resilient regions of the constellation.

\vspace{-2mm}\subsection{Deployment Considerations}\vspace{-2mm}
\vspace{-0mm}The proposed system follows a strict offline
training paradigm. The S-VQ-VAE, SCI, semantic
constellation, and DRL agent are trained jointly
on a central server prior to deployment. At
deployment, the lightweight encoder and SCI MLP
are loaded once onto the sensor node, while the
learned constellation is distributed as a static
lookup table of $M$ complex I/Q coordinates,
incurring negligible sharing overhead. The DRL
agent handles channel and SNR fluctuations
dynamically at inference time, so retraining is
not required for channel variation. Retraining
is only necessary if the source data distribution
shifts fundamentally, as the learned semantic
vocabulary of the VQ-VAE codebook would no longer
align with the new task context. Architectural details and
training hyperparameters are listed in Table~\ref{tab:hyper}.

\vspace{-1mm}\section{Optimality Analysis of Semantic Constellation Design}\vspace{-1mm}

\subsection{Semantic Symbol Vulnerability Metrics}\label{sec:ssv_formulation}\vspace{-2mm}
Evaluating the robustness of semantically critical payloads at the PHY requires moving beyond traditional bit-level error metrics. A decoding error occurs when channel noise displaces a transmitted symbol across a Voronoi boundary into a neighboring region. However, such errors are not equally consequential: errors on symbols encoding task-critical concepts can disrupt downstream inference, while others may have negligible impact. Standard BER treats all errors uniformly, making it a poor metric for semantic robustness. We therefore introduce a metric that weights physical vulnerability by semantic importance and co-occurrence structure.
\vspace{-2mm}\begin{definition}
The \emph{SCI-weighted SSV}
($\mathcal{S}_w$) of constellation $\mathcal{X}$
quantifies the expected physical vulnerability of
semantically critical symbols to decoding errors,
weighting each symbol's proximity to its
co-occurring neighbors by its average SCI
 score $\bar{I}_i$ and joint
co-occurrence probability $P(i,j)$:
\vspace{-4mm}\begin{equation}
\mathcal{S}_w
\;=\; \frac{1}{M}\sum\limits_{i\in \mathcal{X}}\bar{I}_i\underbrace{\left[M^2 \sum_{j \neq i}
 P(i,j)\,
 \exp\!\bigl(-\|x_i - x_j\|^2\bigr)\right]}_{\mathcal{S}_i}.
\label{eq:ssv}
\vspace{-2mm}\end{equation}
Here, the $M^2$ scaling ensures fair comparison
across modulation orders as  joint probabilities
shrink with increasing $M$.
\vspace{-6mm}\end{definition}
Under complex AWGN, the pairwise error probability between
symbols $i$ and $j$ satisfies:
$
\Pr[\hat{i}=j \mid x_i\;\text{sent}]
\;\leq\;
\frac{1}{2}\exp\!\left(
 -\frac{\|x_i - x_j\|^2}{4\sigma^2}
\right),
$
obtained by projecting the complex noise onto the
direction $x_j - x_i$ and applying the standard
Q-function bound. The exponential decay kernel
$\exp(-\|x_i-x_j\|^2)$ used in the $\mathcal{S}_w$ metric of \eqref{eq:ssv}
and the pairwise error bound are both strictly
decreasing functions of $\|x_i-x_j\|^2$ with identical
gradient directions.
Their gradients with respect to $x_i$ 
can be shown to be proportional to $(x_i - x_j)$
with strictly negative scalar prefactors, so they
point in identical directions for all $x_i \neq x_j$
and at any fixed SNR.  Since this directional
equivalence holds for each symbol pair $(i,j)$
independently, it extends to the
$P(i,j)$-weighted sum in
$\mathcal{S}_i$: minimizing $\mathcal{S}_w$ induces
the same optimal symbol placement as minimizing
the $P$-weighted pairwise error
probability at any fixed SNR, making $\mathcal{S}_w$
a SNR-agnostic proxy for PHY semantic vulnerability.
Crucially, symbols in $\mathcal{N}_i^c =
\{j: P(i,j)=0\}$ contribute zero to
$\mathcal{S}_i$ regardless of physical distance, capturing
the \emph{probabilistic isolation} effect: channel
confusions between mutually exclusive symbols cause no
semantic degradation. 
%
Finally, we define $\delta_i=\bar{I}_i - \frac{1}{M}\sum_{i=1}^{M}\bar{I}_i$ and the \emph{SCI score concentration} as
$
  \delta = \max_i\delta_i.$ Let
$\mathcal{S}_{\mathrm{top}} = \{i : \delta_i>0\}$ denote the set of symbols with above-average
SCI scores. This means that a symbol belongs to
$\mathcal{S}_{\mathrm{top}}$ if and only if it
contributes positively to $\delta$. Further, the \emph{SPP} $\mathcal{S}_p$ measures the fraction
of these symbols whose individual vulnerability is
strictly below the global mean vulnerability
$\mu_{\mathcal{S}} = \frac{1}{M}\sum_i
\mathcal{S}_i$:
\vspace{-2mm}\begin{equation}
 \mathcal{S}_p
 \;=\;
\frac{1}{|\mathcal{S}_{\mathrm{top}}|}
\sum_{i \in \mathcal{S}_{\mathrm{top}}}
\mathbf{1}\!\left(\mathcal{S}_i
 < \mu_{\mathcal{S}}\right).
\label{eq:sppr}
\vspace{-2mm}\end{equation}
Operationally, it represents the empirical probability that a semantically critical symbol, if chosen uniformly at random, is shielded better than the constellation average. Further, we formalize the \emph{co-occurrence asymmetry} as $
  \gamma = \max_{i,j} P(i,j) - \min_{i,j} P(i,j) \geq 0.$
When $\delta=0$ all concepts have equal SCI scores; when $\gamma=0$ all concept pairs co-occur with equal probability.

\vspace{-2mm}\begin{theorem}
\label{thm:main}
Let $\mathcal{X}_{\mathrm{QAM}}$ be a standard Gray-coded $M$-QAM constellation with average power $P$, and let $\mathcal{X}^*$ be the $\mathcal{S}_w$-minimizing constellation over all configurations in $\mathbb{C}^M$ subject to $\frac{1}{M}\sum_i|x_i|^2 \leq P$. If $\delta > 0$ and $\gamma > 0$, then $\mathcal{S}_w(\mathcal{X}^*) < \mathcal{S}_w(\mathcal{X}_{\mathrm{QAM}}),$
with protection gap $ \Delta_w = \mathcal{S}_w(\mathcal{X}_{\mathrm{QAM}})-\mathcal{S}_w(\mathcal{X}^*)$ lower-bounded by
\vspace{-2mm}\begin{equation}
  \Delta_w \geq \frac{\delta\cdot\gamma\cdot M}{1 + \zeta^*/w_{\max}} \left[ \exp(-d_{\min}^2) - \exp(-d_{\max}^2) \right],
  \label{eq:gap_bound}
\end{equation}
where $d_{\min}$ is the minimum inter-symbol distance of $\mathcal{X}_{\mathrm{QAM}}$, $d_{\max}$ is the maximum feasible inter-symbol distance under power $P$, $w_{\max} = \max_{i,j}(\bar{I}_i+\bar{I}_j) P(i,j)$, and $\zeta^*$ is the Lagrange multiplier of the optimal solution.
\end{theorem}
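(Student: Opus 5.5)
The plan is to treat the problem as constrained minimization of $\mathcal{S}_w$ and get the gap from a first-order (KKT) comparison between $\mathcal{X}_{\mathrm{QAM}}$ and $\mathcal{X}^*$. First I rewrite the objective in symmetric pairwise form,
\begin{equation*}
\mathcal{S}_w(\mathcal{X}) = M\sum_{i<j} w_{ij}\exp\!\bigl(-\|x_i-x_j\|^2\bigr), \qquad w_{ij}=(\bar{I}_i+\bar{I}_j)P(i,j),
\end{equation*}
using the symmetry $P(i,j)=P(j,i)$. This is where $w_{\max}$ enters. The feasible set $\{\frac{1}{M}\sum_i|x_i|^2\le P\}$ is compact and $\mathcal{S}_w$ is continuous, so $\mathcal{X}^*$ exists. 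Since $\mathcal{X}_{\mathrm{QAM}}$ is feasible, $\Delta_w\ge 0$ holds trivially. The work is to show strictness and the quantitative bound.

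\textbf{Step 1 (QAM is not stationary).} I will write the KKT conditions at $\mathcal{X}^*$:
\begin{equation*}
\nabla_{x_i}\mathcal{S}_w = -2M\sum_{j\ne i} w_{ij}e^{-\|x_i-x_j\|^2}(x_i-x_j) = -\tfrac{2\zeta^*}{M}x_i .
\end{equation*}
For Gray-coded QAM, the geometry depends only on the lattice and is invariant under the symmetry group of the square grid. The weights $w_{ij}$ are not invariant, because $\delta>0$ and $\gamma>0$. So I will exhibit a pair $(i,j)$ with $w_{ij}$ close to $w_{\max}$, placed at distance $d_{\min}$, together with a symbol $i\in\mathcal{S}_{\mathrm{top}}$. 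For that pair the weighted force on $x_i$ is not radial, so it cannot be cancelled by any multiplier $\zeta$. Hence QAM violates KKT and $\Delta_w>0$.

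\textbf{Step 2 (quantitative gap).} I will construct an explicit feasible perturbation rather than analyzing $\mathcal{X}^*$ directly. Take the heaviest-weighted close pair of QAM and move it apart, pushing the high-SCI symbol outward toward distance $d_{\max}$. Then rescale all points to restore the power constraint. The loss reduction on that pair is at least
\begin{equation*}
M\, w\,\bigl[e^{-d_{\min}^2}-e^{-d_{\max}^2}\bigr],
\end{equation*}
where $w$ is its weight. I lower-bound $w$ by a quantity of order $\delta\gamma$. The SCI excess contributes $\delta$, and the gap between the largest and smallest co-occurrence probabilities contributes $\gamma$. This requires a pigeonhole/averaging argument comparing the top-SCI symbol's heaviest partner against the mean.

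The rescaling costs energy. I will account for it through the Lagrangian $\mathcal{S}_w+\zeta(\frac1M\sum|x_i|^2-P)$. The marginal cost of power relative to the marginal gain, which is bounded by $w_{\max}$, contributes the factor $(1+\zeta^*/w_{\max})^{-1}$. Finally, $\mathcal{S}_w(\mathcal{X}^*)$ is at most the value of the perturbed configuration, which gives \eqref{eq:gap_bound}.

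\textbf{Main obstacle.} The hardest step is Step 2. Three things must be controlled together: the rescaling's effect on all other pairs (it only increases distances, which helps); the possibility that the moved point approaches other symbols; and a clean product lower bound $w\gtrsim\delta\gamma$. The last is not automatic, because the maximal-$P$ pair need not involve the maximal-$\delta$ symbol. I would first try choosing the moved symbol as $\arg\max_i\delta_i$ and summing over its partners, which gives a factor $\delta\cdot\sum_j P(i,j)$. I would then relate that sum to $\gamma$ via the $M^2$ normalization. If this fails, I would fall back to a convex-combination argument between the two extremal pairs.
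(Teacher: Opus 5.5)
\textbf{Your Step 1 fails, and the strict inequality is false as stated, so no version of this plan can close it.} $\mathcal{S}_w$ sees the source only through the weights $w_{ij}=(\bar I_i+\bar I_j)P(i,j)$. The hypotheses $\delta>0$ and $\gamma>0$ constrain $\bar I$ and $P$ separately. They therefore neither make $w$ non-invariant under the grid symmetries nor guarantee a heavy pair at distance $d_{\min}$. Even for non-invariant $w$, the KKT condition concerns the \emph{net} force $\sum_j w_{ij}e^{-\|x_i-x_j\|^2}(x_i-x_j)$ on each symbol. A non-radial force from one pair can be cancelled by the others, so non-stationarity does not follow from a single non-radial pair.

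Concretely, let $M=4$ and $\bar I=(0.9,0.1,0.1,0.1)$. Put all co-occurrence on the two antipodal pairs $\{1,3\},\{2,4\}$ of the Gray-labelled square, with $P(1,3)=\tfrac1{12}$ and $P(2,4)=\tfrac5{12}$. Then $\delta=0.6$ and $\gamma=\tfrac5{12}$, but $w_{13}=w_{24}=\tfrac1{12}$ and every other $w_{ij}=0$. Set $s=|x_1|^2+|x_3|^2$ and $r=|x_2|^2+|x_4|^2$, so $s+r\le 4P$. Using $\|x_1-x_3\|^2\le 2s$, $\|x_2-x_4\|^2\le 2r$ and convexity,
\begin{equation*}
\mathcal{S}_w=\tfrac13\bigl(e^{-\|x_1-x_3\|^2}+e^{-\|x_2-x_4\|^2}\bigr)\ge\tfrac13\bigl(e^{-2s}+e^{-2r}\bigr)\ge\tfrac23\,e^{-(s+r)}\ge\tfrac23\,e^{-4P},
\end{equation*}
and 4-QAM attains equality throughout. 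Hence $\Delta_w=0$, while the right side of \eqref{eq:gap_bound} is strictly positive. The same construction, with weight $\propto e^{4|x_i|^2}$ on each antipodal pair $\{i,i'\}$, makes $\mathcal{X}_{\mathrm{QAM}}$ a global minimizer for every $M$.

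The paper's sketch rests on the same symmetry claim, namely that stationarity at $\mathcal{X}_{\mathrm{QAM}}$ forces uniform $w_{ij}$, so it does not repair this. The qualitative claim needs an explicit hypothesis: that $\mathcal{X}_{\mathrm{QAM}}$, with its labelling, violates the stationarity system for the given $w_{ij}$. Under that hypothesis, a standard feasible-descent argument gives strictness, since the constraint gradient is nonzero.

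\textbf{Step 2 also fails, and so does the bound itself.}

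First, pushing one symbol outward raises the total power. The rescaling that restores feasibility therefore \emph{shrinks} every other distance, which hurts rather than helps. Driving one pair to the maximal feasible separation $\sqrt{2MP}$ forces every other symbol to the origin, where each of their terms becomes $Mw_{kl}$. So the single-pair saving $Mw[e^{-d_{\min}^2}-e^{-d_{\max}^2}]$ is not a lower bound on the net change.

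Second, the factor $(1+\zeta^*/w_{\max})^{-1}$ has no derivation. $\zeta^*$ belongs to $\mathcal{X}^*$, and a Lagrangian heuristic gives no control over a finite move away from $\mathcal{X}_{\mathrm{QAM}}$ in a nonconvex problem.

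Third, $w\gtrsim\delta\gamma$ is false. Take $\bar I=(0.9,\epsilon,\epsilon,\epsilon)$, $P(2,3)=0.4$, $P(1,j)=\eta$ for $j=2,3,4$, all other off-diagonal entries $0$, and the remaining mass on the diagonal. Then $\delta\gamma\approx 0.27$, but $w_{\max}=\max\{0.8\epsilon,(0.9+\epsilon)\eta\}\to 0$.

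This example also refutes \eqref{eq:gap_bound} directly. On one side, $\Delta_w\le\mathcal{S}_w(\mathcal{X}_{\mathrm{QAM}})\le M\binom{M}{2}w_{\max}\to 0$. On the other, pairing your KKT equations with $x_i^*$ and summing gives
$\zeta^*\le\frac{M}{P}\sum_{i<j}w_{ij}d_{ij}^2e^{-d_{ij}^2}\le\frac{M}{eP}\binom{M}{2}w_{\max}$, with $d_{ij}=\|x_i^*-x_j^*\|$. So $\zeta^*/w_{\max}$ is bounded by a constant depending only on $M$ and $P$, and the right side stays above $c(M,P)\,\delta\gamma>0$.

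The paper reaches \eqref{eq:gap_bound} by a different route. It lower-bounds $\mathcal{S}_w(\mathcal{X}_{\mathrm{QAM}})\ge\delta\gamma Me^{-d_{\min}^2}$, which this example also violates. It then upper-bounds $\mathcal{S}_w(\mathcal{X}^*)$ by a heuristic force-balance scaling. Neither route is sound. A correct quantitative gap would have to scale with the weights $w_{ij}$, e.g.\ through the KKT residual at $\mathcal{X}_{\mathrm{QAM}}$, not with $\delta\gamma$.
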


\vspace{-2mm}\begin{IEEEproof}[Proof Sketch]
The $\mathcal{S}_w$ minimisation over $\mathbb{C}^M$ with power constraint admits the Lagrangian $\mathcal{L} = \mathcal{S}_w(\mathcal{X}) + \zeta(\frac{1}{M}\sum_i|x_i|^2 - P)$. Using $P(i,j)=P(j,i)$, the KKT stationarity condition at $\mathcal{X}^*$ is
\vspace{-1mm}\begin{equation}
  \frac{2\zeta^*}{M} x_i^* = 2\sum_{j\in\mathcal{N}_i} w_{ij}^* (x_i^* - x_j^*),
  \label{eq:kkt}
\vspace{-2mm}\end{equation}
where $w_{ij}^* = (\bar{I}_i+\bar{I}_j) P(i,j)\exp(-\|x_i^*-x_j^*\|^2)$ and the sum runs only over $\mathcal{N}_i$ since $w_{ij}^*=0$ for $j\in\mathcal{N}_i^c$. The entire power budget for symbol $i$ is therefore directed toward separating it from its semantically
coupled neighbors, with zero budget wasted on non-co-occurring neighbors. 
For~\eqref{eq:kkt} to hold at
$\mathcal{X}_{\mathrm{QAM}}$, the weights $w_{ij}$
must be symmetric under all symmetry operations of
the rectangular grid, requiring uniform $w_{ij}$
across all nearest-neighbor pairs.
Since $\delta > 0$, the importance values
$\bar{I}_i$ are non-uniform, and since $\gamma > 0$,
the co-occurrence probabilities $P(i,j)$ are
non-uniform. Since Gray coding assigns symbol
positions independently of $\bar{I}_i$ and $P(i,j)$,
the products $(\bar{I}_i + \bar{I}_j)P(i,j)$ are
non-uniform across nearest-neighbor pairs, violating
the uniformity condition. Therefore
$\nabla_{x_{i^\dagger}}\mathcal{S}_w
|_{\mathcal{X}_{\mathrm{QAM}}} \neq \mathbf{0}$
for at least one symbol $i^\dagger$, and
$\mathcal{X}_{\mathrm{QAM}}$ is not a stationary
point of $\mathcal{L}$.  Since $\mathcal{X}_{\mathrm{QAM}}$ is not
stationary, there exists a perturbation
$\mathcal{X}_\epsilon$ feasible under the power
constraint such that
$\mathcal{S}_w(\mathcal{X}_\epsilon) <
\mathcal{S}_w(\mathcal{X}_{\mathrm{QAM}})$.
Since $\mathcal{X}^*$ globally minimizes
$\mathcal{S}_w$ over the feasible set:
$ \mathcal{S}_w(\mathcal{X}^*)
  \leq \mathcal{S}_w(\mathcal{X}_\epsilon)
  < \mathcal{S}_w(\mathcal{X}_{\mathrm{QAM}}).$ Since $\delta > 0$, there exists
$i^\dagger = \arg\max_i \bar{I}_i$ with
$\bar{I}_{i^\dagger} \geq \mu_{\bar{I}} + \delta$.
Since $\gamma > 0$, there exists a pair
$(i^\dagger, j^\dagger)$ with $P(i^\dagger,
j^\dagger) \geq \gamma/M^2$ after $M^2$
normalisation. Since Gray coding places symbols
independently of co-occurrence structure, this
pair is separated by at most $d_{\min}$ on the
uniform grid. The monotone decay of the
exponential kernel gives the contribution of
this pair to $\mathcal{S}_w(\mathcal{X}_
{\mathrm{QAM}})$ as at least
$(\mu_{\bar{I}} + \delta)\cdot\gamma\cdot
\exp(-d_{\min}^2)$. Summing over all $M$
symbols with the $M$ prefactor:
\begin{equation*}
  \mathcal{S}_w(\mathcal{X}_{\mathrm{QAM}})
  \geq \delta\cdot\gamma\cdot M\cdot
  \exp(-d_{\min}^2).
\end{equation*}
At $\mathcal{X}^*$, the power constraint bounds
$|x_i^*|^2 \leq MP$, so the maximum feasible
inter-symbol distance is $d_{\max} = 2\sqrt{MP}$.
The KKT force balance in~\eqref{eq:kkt} shows
that the effective separation scales as
$w_{\max}/(w_{\max} + \zeta^*/M)$, yielding the
factor $1/(1 + \zeta^*/w_{\max})$. At maximum
separation $d_{\max}$:
\begin{equation*}
  \mathcal{S}_w(\mathcal{X}^*)
  \leq \frac{\delta\cdot\gamma\cdot M}
       {1 + \zeta^*/w_{\max}}
  \cdot\exp(-d_{\max}^2).
\end{equation*}
 
\emph{Subtracting and using
$1/(1+\zeta^*/w_{\max}) \leq 1$ and
$d_{\min} < d_{\max}$:}
\begin{align*}
  \Delta_w
  &\geq \delta\cdot\gamma\cdot M\cdot
    \exp(-d_{\min}^2)
    - \frac{\delta\cdot\gamma\cdot M}
           {1+\zeta^*/w_{\max}}
      \exp(-d_{\max}^2) \\
  &\geq \frac{\delta\cdot\gamma\cdot M}
             {1+\zeta^*/w_{\max}}
    \!\left[\exp(-d_{\min}^2)
           - \exp(-d_{\max}^2)\right],
\end{align*}
establishing~\eqref{eq:gap_bound}.
\end{IEEEproof}
\vspace{-2mm}\begin{corollary}
\label{cor:ber}
Under the conditions of Theorem \ref{thm:main}, $\mathcal{X}^*$ achieves strictly lower $\mathcal{S}_w$ and strictly lower semantic error than $\mathcal{X}_{\mathrm{QAM}}$, while exhibiting strictly higher average BER.
\end{corollary}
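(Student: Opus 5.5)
The corollary has three claims, and I will prove them in order.

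\emph{Strictly lower $\mathcal{S}_w$.} This is Theorem~\ref{thm:main} directly: when $\delta>0$ and $\gamma>0$, $\mathcal{S}_w(\mathcal{X}^*)<\mathcal{S}_w(\mathcal{X}_{\mathrm{QAM}})$.

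\emph{Strictly lower semantic error.} I define the semantic error as the SCI-weighted, co-occurrence-weighted probability of symbol confusion,
\[
\mathcal{E}_{\mathrm{sem}}(\mathcal{X})=\sum_i \bar{I}_i\sum_{j\neq i}P(i,j)\Pr[\hat{i}=j\mid x_i].
\]
Confusions with pairs in $\mathcal{N}_i^c$ carry no weight, matching the probabilistic-isolation remark. I then use the pairwise bound $\Pr[\hat{i}=j\mid x_i]\le\tfrac12\exp(-\|x_i-x_j\|^2/4\sigma^2)$ together with the directional-equivalence argument stated after the definition of $\mathcal{S}_w$. Both kernels are strictly decreasing in $\|x_i-x_j\|^2$ and have gradients parallel to $(x_i-x_j)$. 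Hence the perturbation $\mathcal{X}_\epsilon$ from the proof sketch, which strictly decreases $\mathcal{S}_w$ from $\mathcal{X}_{\mathrm{QAM}}$, also strictly decreases the weighted pairwise-error surrogate.

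To make this rigorous at a fixed SNR, I will replace $\exp(-d^2)$ by $\exp(-d^2/4\sigma^2)$ in the Lagrangian. The KKT argument of Theorem~\ref{thm:main} is unchanged: the non-uniformity of $(\bar{I}_i+\bar{I}_j)P(i,j)$ over nearest-neighbour pairs still prevents $\mathcal{X}_{\mathrm{QAM}}$ from being stationary. So the minimizer of this rescaled $\mathcal{S}_w$ (for which the theorem applies verbatim) achieves strictly smaller weighted error bound than QAM. I will identify $\mathcal{X}^*$ with it under the paper's SNR-agnostic proxy claim.

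\emph{Strictly higher average BER.} I use the fact that Gray-coded $M$-QAM is the (asymptotically) average-BER-optimal labeled lattice under the power constraint. The cleanest route is the high-SNR regime. There, average BER is governed by $d_{\min}$ and the number of nearest neighbours with one-bit Gray labels. The KKT condition \eqref{eq:kkt} puts zero repulsion on pairs in $\mathcal{N}_i^c$ and pushes high-$\bar I$ symbols outward. With the power budget fixed, this forces at least two low-importance or non-co-occurring symbols strictly closer than the QAM $d_{\min}$ (as seen in Fig.~\ref{fig:256_QAM_semantic_constil_evolution}). I will argue this by contradiction: if every pairwise distance in $\mathcal{X}^*$ were at least the QAM $d_{\min}$, then by the optimality of the square lattice packing at that power (for the relevant $M$), $\mathcal{X}^*$ would be congruent to the grid. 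That contradicts the non-stationarity shown in the theorem. A smaller minimum distance then gives a strictly larger dominant BER term. Moreover, the bit labels are tied to concept indices rather than Gray order, so adjacent symbols may differ in more than one bit, which only increases BER further.

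\emph{Main obstacle.} The hard step is this last one. Strict BER dominance is only clean asymptotically in SNR, and the lattice-packing optimality claim needs care for general $M$. If a uniform-in-SNR statement fails, I will state the BER claim in the high-SNR sense, via the union-bound leading term.
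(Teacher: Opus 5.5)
Your first claim is fine, and your skeleton matches the paper's. The paper's proof is only a qualitative remark: the $\mathcal{S}_w$-descent direction crowds low-SCI symbols into the interior, which raises BER, and since those errors land on semantically negligible symbols, the semantic error drops. The steps you add to make this rigorous do not hold.

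\textbf{BER step.} This is the decisive gap. It is false that a configuration at average power $P$ with all pairwise distances at least $d_{\min}(\mathcal{X}_{\mathrm{QAM}})$ must be congruent to the grid. For $M=16$, take the zero-mean hexagonal-lattice set formed by the origin, its $6$ neighbours at distance $1$, the $6$ points at distance $\sqrt3$, and $3$ alternate points at distance $2$. It has $d_{\min}^2/E_{\mathrm{avg}}=4/9$, strictly above the 16-QAM value $2/5$. For large $M$, hexagonal packings with a circular boundary gain about $0.8$~dB. So the non-stationarity of $\mathcal{X}_{\mathrm{QAM}}$ gives no upper bound on $d_{\min}(\mathcal{X}^*)$.

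Worse, nothing in $\mathcal{S}_w$ pushes $d_{\min}(\mathcal{X}^*)$ below the grid value. Writing $x=\sqrt{P}\,u$, one has $-\frac{1}{P}\log\mathcal{S}_w\to\min_{i\neq j,\,P(i,j)>0}\|u_i-u_j\|^2$ as the power $P\to\infty$, uniformly in $u$. So when every pair co-occurs, which is still compatible with $\delta,\gamma>0$, the minimizer tends to a max--min-distance packing. For $M=16$ such a packing has strictly larger $d_{\min}$ than QAM, and hence lower BER at high SNR. A strict BER increase therefore does not follow from $\delta,\gamma>0$ alone. You would need extra hypotheses on the power and on the zero pattern of $P(i,j)$, and the paper's crowding heuristic does not supply them either.

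\textbf{Semantic-error step.} There are two further gaps here.

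First, you only show that an \emph{upper bound} on $\mathcal{E}_{\mathrm{sem}}$, built from $\tfrac12\exp(-\|x_i-x_j\|^2/4\sigma^2)$, is smaller for one constellation. That does not order the true values, because the bound can be loose by different amounts for the two constellations. Moreover, the exact confusion probability $\Pr[\hat i=j\mid x_i]$ under minimum-distance detection depends on the whole Voronoi cell of $x_i$, not on $\|x_i-x_j\|$ alone. You would need a lower bound on $\mathcal{E}_{\mathrm{sem}}(\mathcal{X}_{\mathrm{QAM}})$ that exceeds the upper bound for $\mathcal{X}^*$.

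Second, identifying $\mathcal{X}^*$ with the minimizer of the $\sigma$-rescaled objective is unjustified. The substitution $x\mapsto x/(2\sigma)$ turns that problem into the original one at power $P/(4\sigma^2)$. Its minimizer is generally not a scaled copy of $\mathcal{X}^*$; the previous paragraph shows the optimal shape changes with power.

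The paper's directional-equivalence remark cannot rescue this. Per pair, the two gradients differ by the factor $4\sigma^2\exp\bigl(-d^2(1-1/(4\sigma^2))\bigr)$, which depends on $d$. So the weighted sums over $j$ are not parallel unless $4\sigma^2=1$.

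Finally, showing that a local perturbation $\mathcal{X}_\epsilon$ of the grid decreases the surrogate says nothing about the global minimizer $\mathcal{X}^*$.
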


\vspace{-2mm}\begin{IEEEproof}
The $\mathcal{S}_w$ reduction follows from Theorem \ref{thm:main}. The average BER increases because the descent direction that reduces $\mathcal{S}_w$ crowds low-SCI symbols into high-density interior regions, increasing their individual error probability. Since these errors fall on semantically negligible symbols, the semantic error decreases simultaneously.
\end{IEEEproof}

\vspace{-2mm}\section{Simulation Results and Analysis}\vspace{-1mm}
We evaluate the proposed system on MNIST, Fashion-MNIST, and the Free Spoken Digit Dataset (FSDD), spanning basic image classification, complex visual feature extraction, and audio processing to demonstrate its cross-domain multimodal capability.
All neural networks are implemented in TensorFlow and trained on an NVIDIA DGX Spark server, with the PHY simulated using the GPU-accelerated Sionna PHY library~\cite{sionna}. To facilitate reproducibility, the complete
source code are publicly
available at~\footnote{\url{https://github.com/THE-TRAIN-LAB/Semantic-QAM}}. The wireless channel is modeled as AWGN, evaluated over $[-10, 20]$\,dB SNR. Modulation orders span $M \in \{4, 16, 64, 256, 1024\}$, with the codebook size constrained to $|\mathcal{C}| = M$ in each configuration. All numeric hyperparameters are listed in Table~\ref{tab:hyper}. The \emph{standard $M$-QAM} baseline retains the SC pipeline but uses a fixed rectangular grid instead of a learned constellation. 
The composite semantic quality score is defined as: $\mathcal{Q}_{\mathrm{sem}}
  = 0.6\,Q_{\mathrm{task}}
  + 0.25\,P_{c}
  + 0.15\exp(-D_{\mathrm{KL}}({p} \| \hat{p})),$
where $Q_{\mathrm{task}}$ is strict classification accuracy, $P_{c} = 1 -
\mathbb{E}[|\max({p}) -
\max(\hat{p})|]$ measures
how faithfully the peak classifier confidence is
preserved, where ${p}$ and
$\hat{p}$ are the softmax
output distributions of 
$\mathcal{T}$ evaluated on the original and
reconstructed images respectively. $ \exp(-D_{\mathrm{KL}}({p} \| \hat{p}))$ measures distribution similarity using KL divergence $D_{\mathrm{KL}}$. The weighting prioritizes strict task accuracy over classifier output distribution fidelity.

\vspace{-2mm}\subsection{Semantic Quality and the BER Paradox}\vspace{-2mm}
 \begin{figure}[t]
    \centering
    \includegraphics[width=0.7\linewidth]{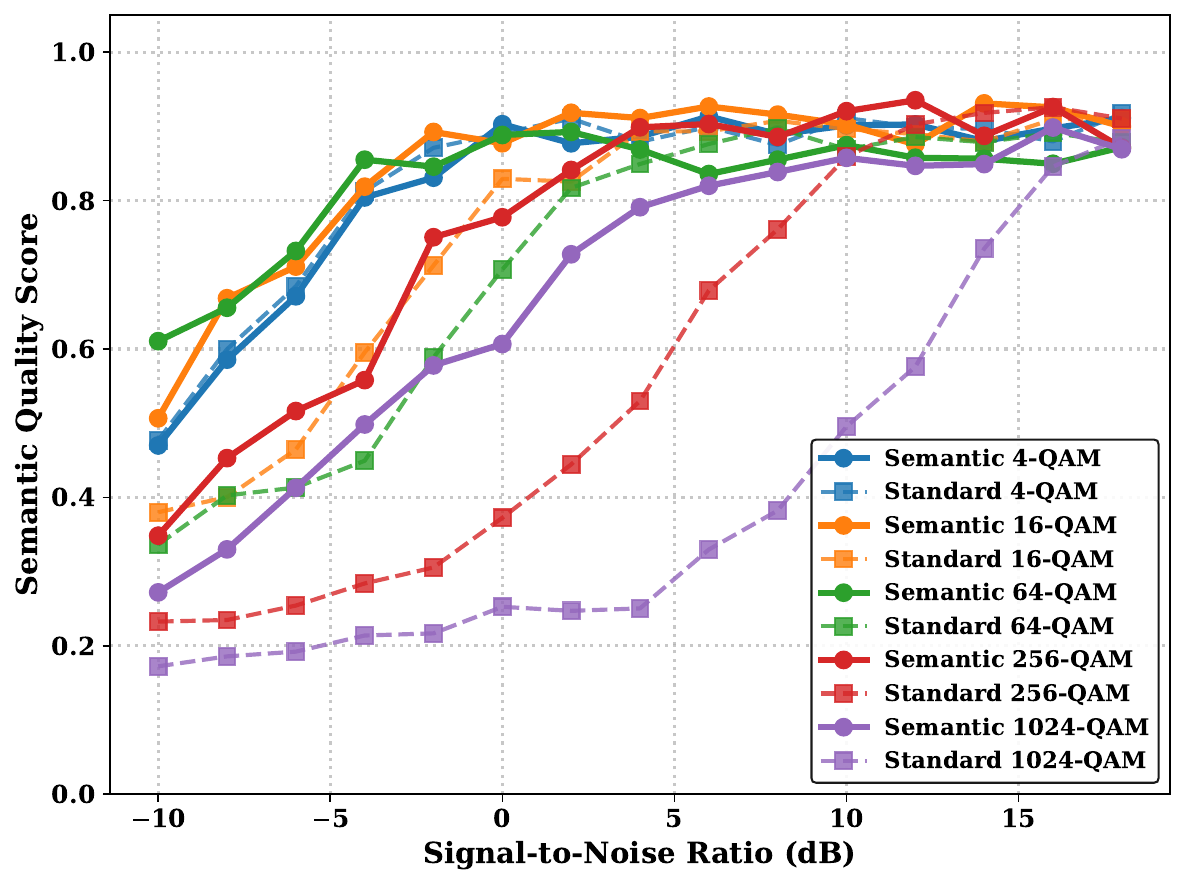}
    \vspace{-3mm}\caption{\small Semantic Quality across distinct M-QAMs for MNIST.} 
    \label{fig:sem_q_vs_snr}
\vspace{-5mm}\end{figure}

Fig.~\ref{fig:sem_q_vs_snr} compares $\mathcal{Q}_{\mathrm{sem}}$ across all modulation orders for the MNIST dataset. Semantic $M$-QAM consistently
outperforms standard $M$-QAM across the full SNR
range, with average gains of approximately 40\%
at low SNR ($-10$ to $0$\,dB) and 15\% at high
SNR ($5$ to $15$\,dB), with the gap widening
at higher modulation orders where Semantic
1024-QAM achieves $\mathcal{Q}_{\mathrm{sem}}
\approx 0.60$ versus $0.25$ for its standard
counterpart at $0$\,dB. This demonstrates that the learned constellation maintains task accuracy even at low-SNR regime by ensuring high SCI concepts are well separated from others. The narrower gain at $M=4$ is a direct consequence of the codebook constraint: with only 4 concepts, the SCI cannot finely decouple task-critical features from background noise. 


Fig.~\ref{fig:ber_vs_snr} shows that semantic $M$-QAM exhibits strictly \emph{higher} average BER at higher SNR than Standard $M$-QAM across all modulation orders. This is because the learned mapper deliberately crowds low-SCI symbols into the centre of the I/Q plane, sacrificing their decodability to maximize physical separation for task-critical concepts. Since these bit-level errors fall entirely on semantically negligible symbols, they have near-zero impact on $\mathcal{Q}_{\mathrm{sem}}$. This empirically validates Corollary~\ref{cor:ber}. 
\begin{figure}[t]
    \centering
    
    \begin{minipage}{0.48\linewidth}
        \centering
        \includegraphics[width=\linewidth]{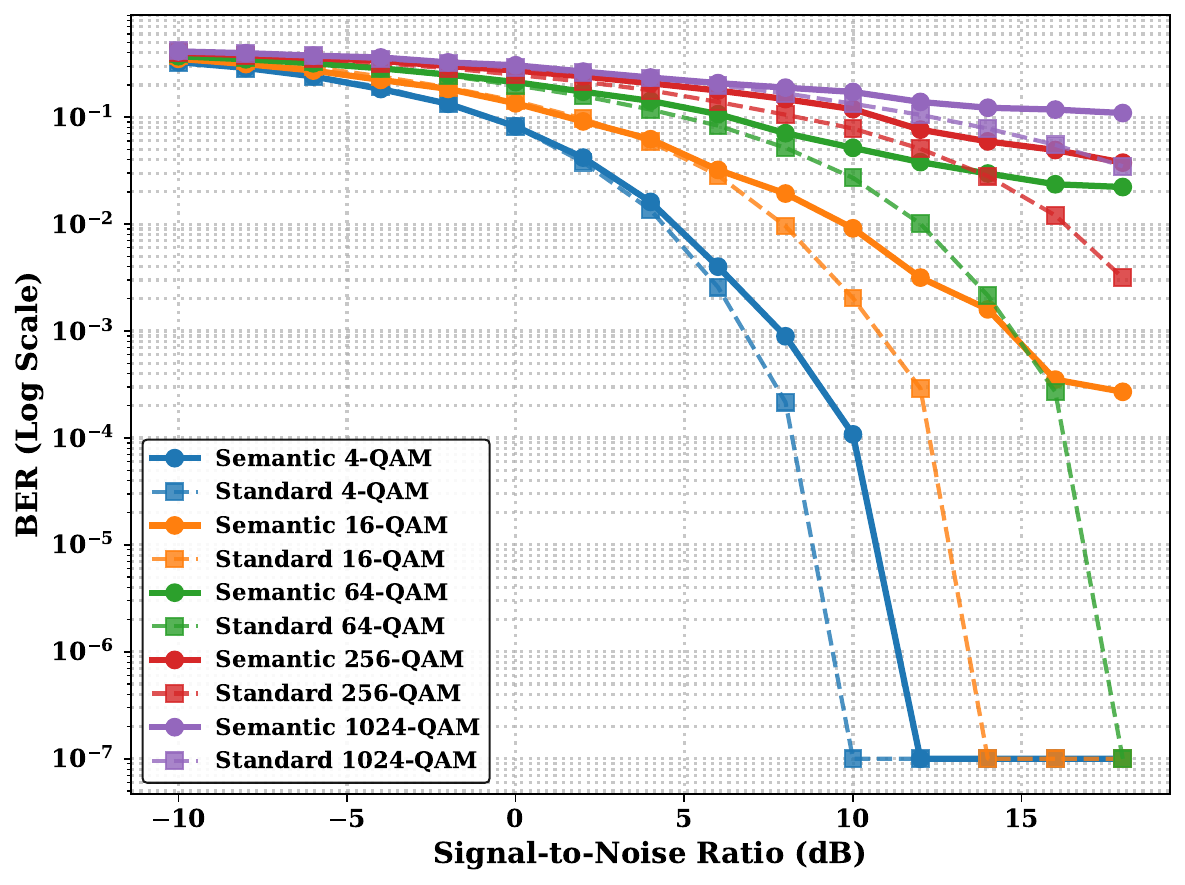}
        \vspace{-7mm} 
        \caption{\scriptsize Bit Error Rate vs. SNR (dB).}
        \label{fig:ber_vs_snr}
    \end{minipage}%
    \hfill
    \begin{minipage}{0.48\linewidth}
        \centering
        \includegraphics[width=\linewidth]{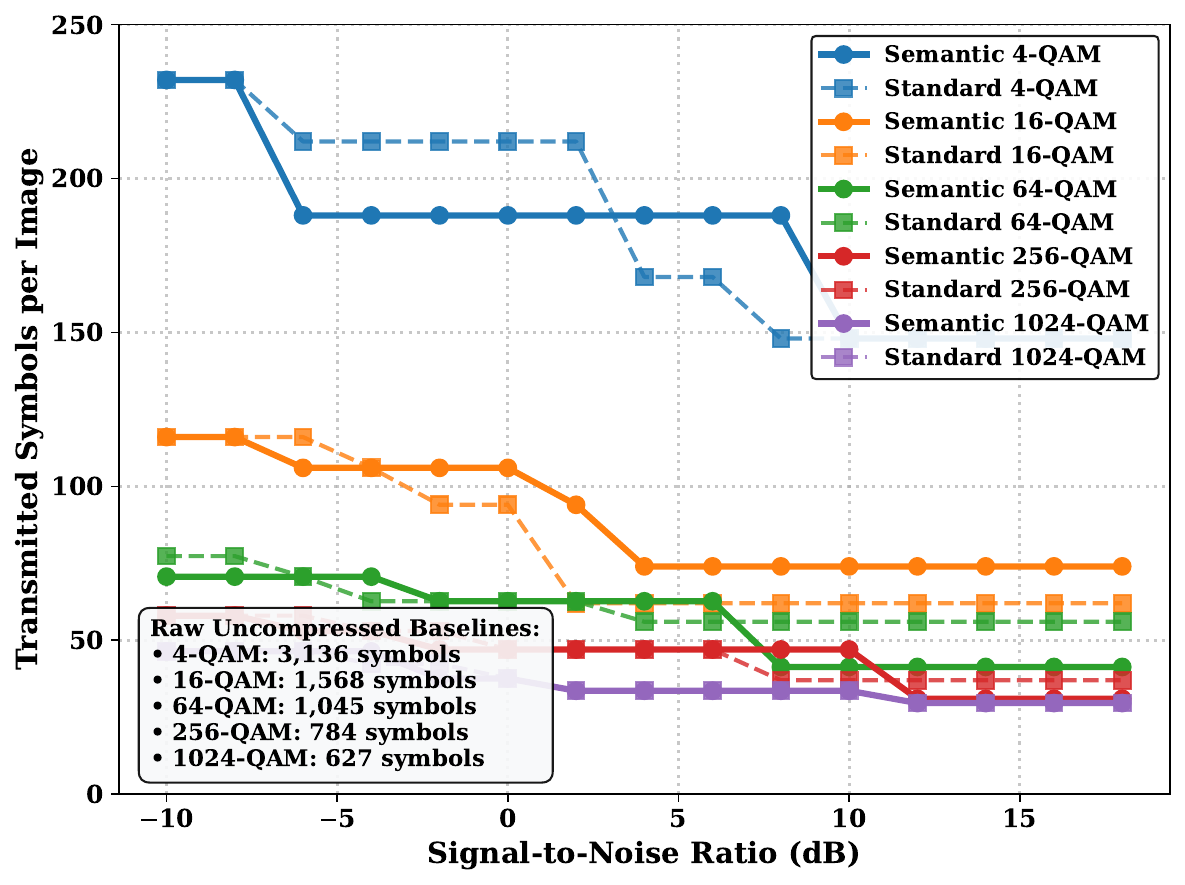}
        \vspace{-7mm} 
        \caption{\scriptsize Symbols transmitted vs. SNR.}
        \label{fig:sym_tx_per_im_vs_snr}
    \end{minipage}
    
    \vspace{-7mm} 
\end{figure}
\vspace{-0mm}\subsection{Adaptive Compression and Latency}\vspace{-2mm}

Fig.~\ref{fig:sym_tx_per_im_vs_snr} shows that the DRL agent scales $K$ inversely with SNR, using larger $K$ for redundancy at low SNR and aggressive Top-$K$ compression at high SNR. Even at $-10$ dB, semantic $1024$-QAM transmits fewer than $40$ symbols per image versus $627$ in the baseline ($>15\times$ reduction), exceeding $20\times$ at high SNR. This demonstrates that joint semantic compression and PHY protection improves $\mathcal{Q}_{\mathrm{sem}}$ while reducing symbol count across all SNRs.

\vspace{-2mm}\subsection{Cross-Domain Generalization}\vspace{-2mm}
The system’s cross-domain applicability is validated on the Fashion-MNIST visual dataset \cite{xiao2017fashion} and the audio-based FSDD dataset, where semantics are extracted from audio spectrograms. Fig.~\ref{fig:cross_domain_results} shows that the semantic constellation consistently outperforms the standard baseline across all modulation orders and SNR regimes. The adaptive compression and BER paradox behaviors remain consistent across modalities, indicating that the semantic constellation architecture is robust and dataset-agnostic.
\begin{figure}[t]
    \centering
    \subfloat[]{\includegraphics[width=0.48\linewidth]{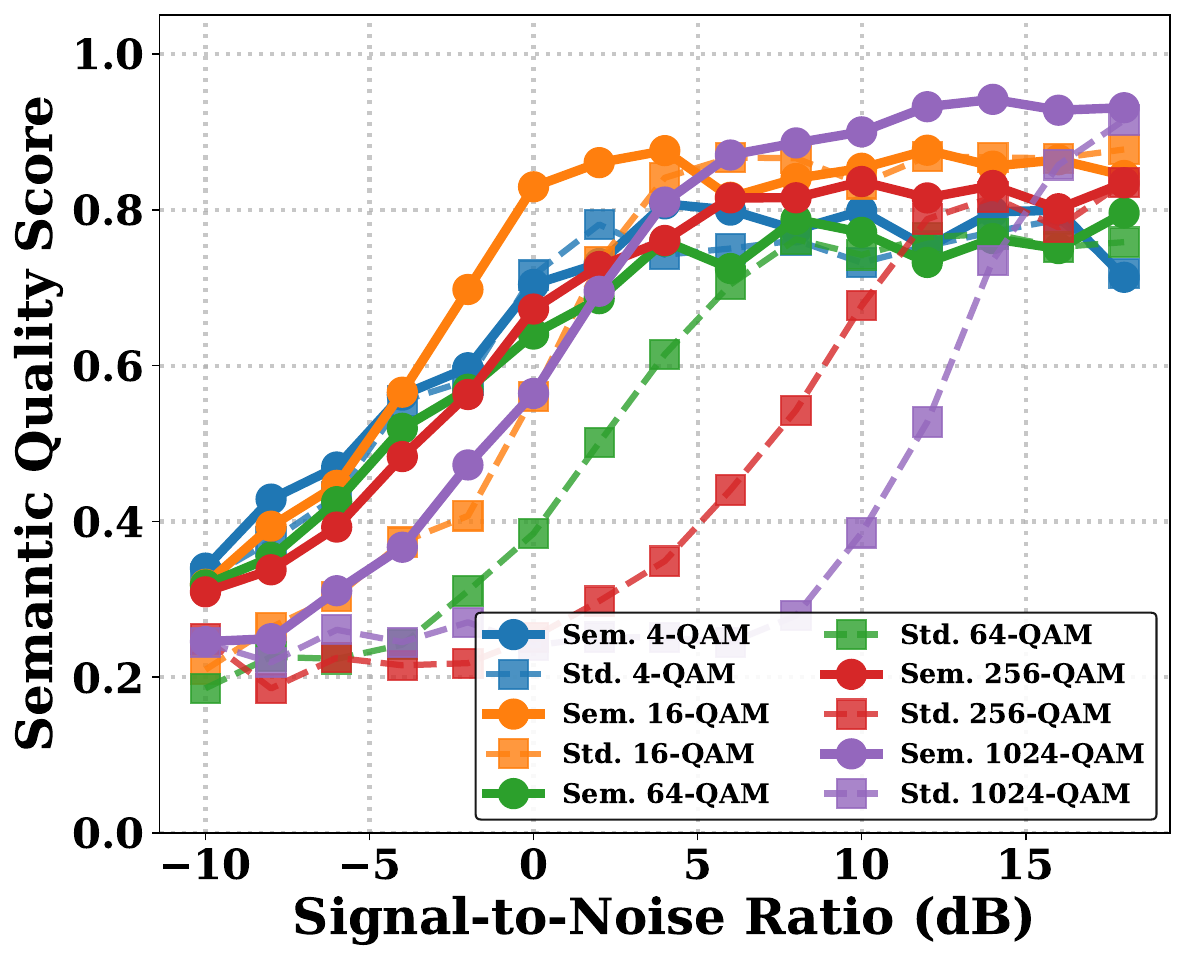}\label{fig:fsdd_sem}}
    \hfill
    \subfloat[]{\includegraphics[width=0.48\linewidth]{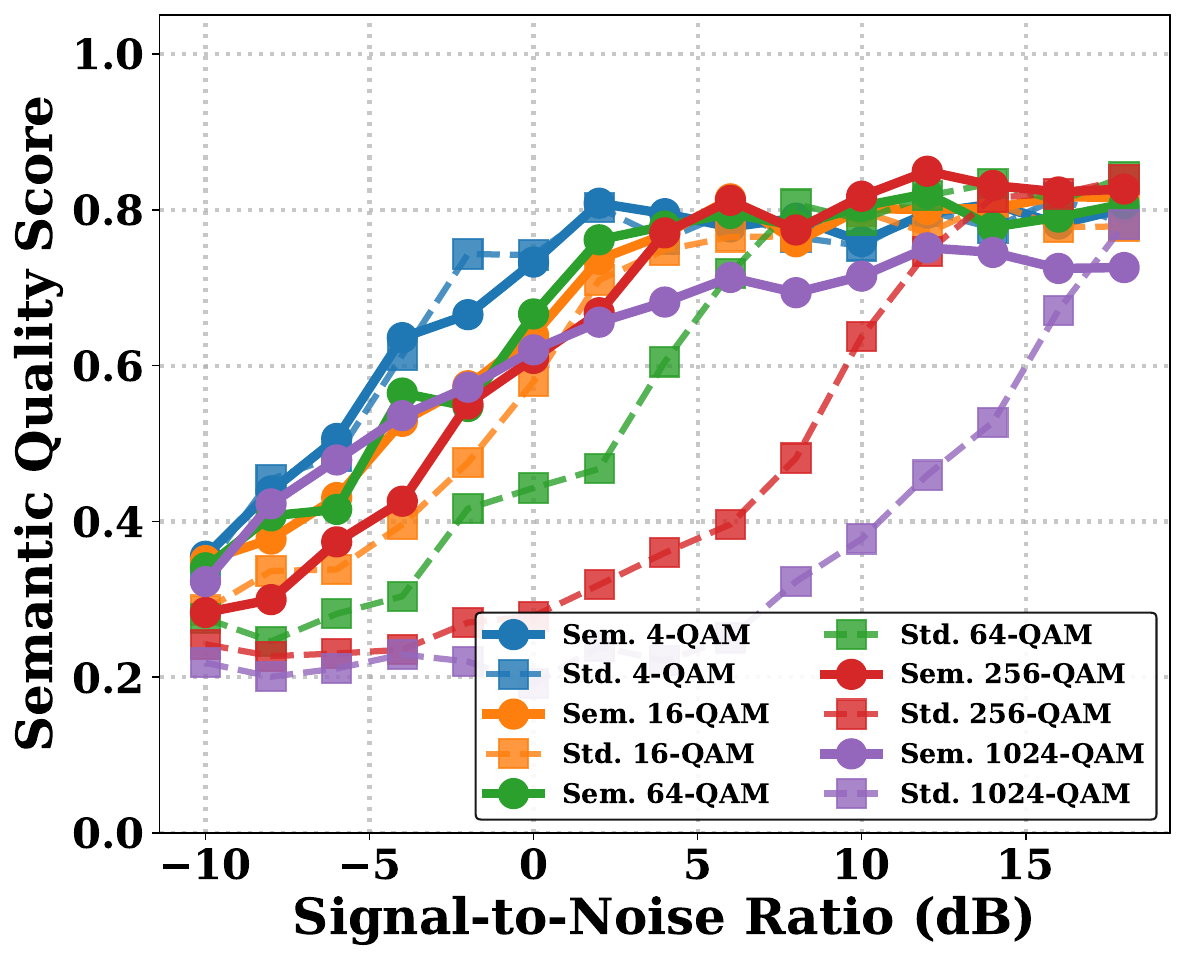}\label{fig:fmnist_sem}}
    \vspace{-3mm}\caption{\small  Semantic Quality vs.\ SNR (dB) for FSDD audio (a) and Fashion-MNIST visual (b) datasets.}
    \label{fig:cross_domain_results}
\vspace{-7mm}\end{figure}

\vspace{-2mm}\subsection{Semantic Symbol Vulnerability Analysis}\vspace{-1mm}
\begin{figure}[t]
  \centering
  \subfloat[]{\includegraphics[width=0.48\linewidth]{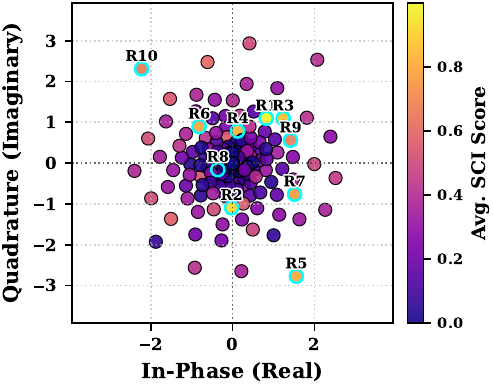}}
  \hfill
  \subfloat[]{\includegraphics[width=0.48\linewidth]{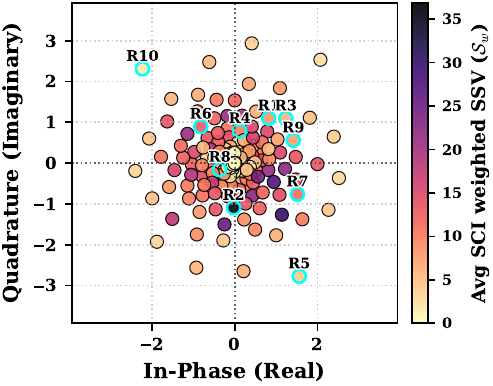}}
  \vspace{-3mm}\caption{\small Learned 256-QAM constellation with respect to (a) average SCI and (b) $\mathcal{S}_w$. Cyan rings highlight the top-10 most critical concepts (R1--R10).}
  \label{fig:constellation}
\vspace{-8mm}\end{figure}
 
In Fig.~\ref{fig:constellation}, the left panel colours each learned 256-QAM symbol by its average SCI. The right panel maps the $\mathcal{S}_w$  onto the same constellation. Two  protection strategies emerge. High-SCI symbols with large $|\mathcal{N}_i|$ are pushed to the outer perimeter, maximizing physical distance from their co-occurring neighbors (\emph{spatial isolation}). Conversely, several high-SCI symbols reside safely in the dense interior, where their vulnerability is negligible because $P(i,j) \approx 0$ for all physical neighbors $j$ (\emph{probabilistic isolation}). This behavior is precisely the mechanism characterized by the stationarity condition in~\eqref{eq:kkt} and cannot emerge from any importance-blind assignment, including Gray-coded QAM. Fig.~\ref{fig:cross_dataset_evaluation} confirms
that the semantic constellation consistently
suppresses $\mathcal{S}_w$ across all modulation
orders, with the gap growing with $M$ as predicted
by~\eqref{eq:gap_bound}, while maintaining near
100\% $\mathcal{S}_p$ versus roughly 50\% for
standard constellations at $M = 1024$.

\vspace{-2mm}\begin{figure}[t]
    \centering
    \subfloat[\vspace{-1mm}$\mathcal{S}_w$ vs.\ $M$-QAM]{\includegraphics[width=0.48\linewidth]{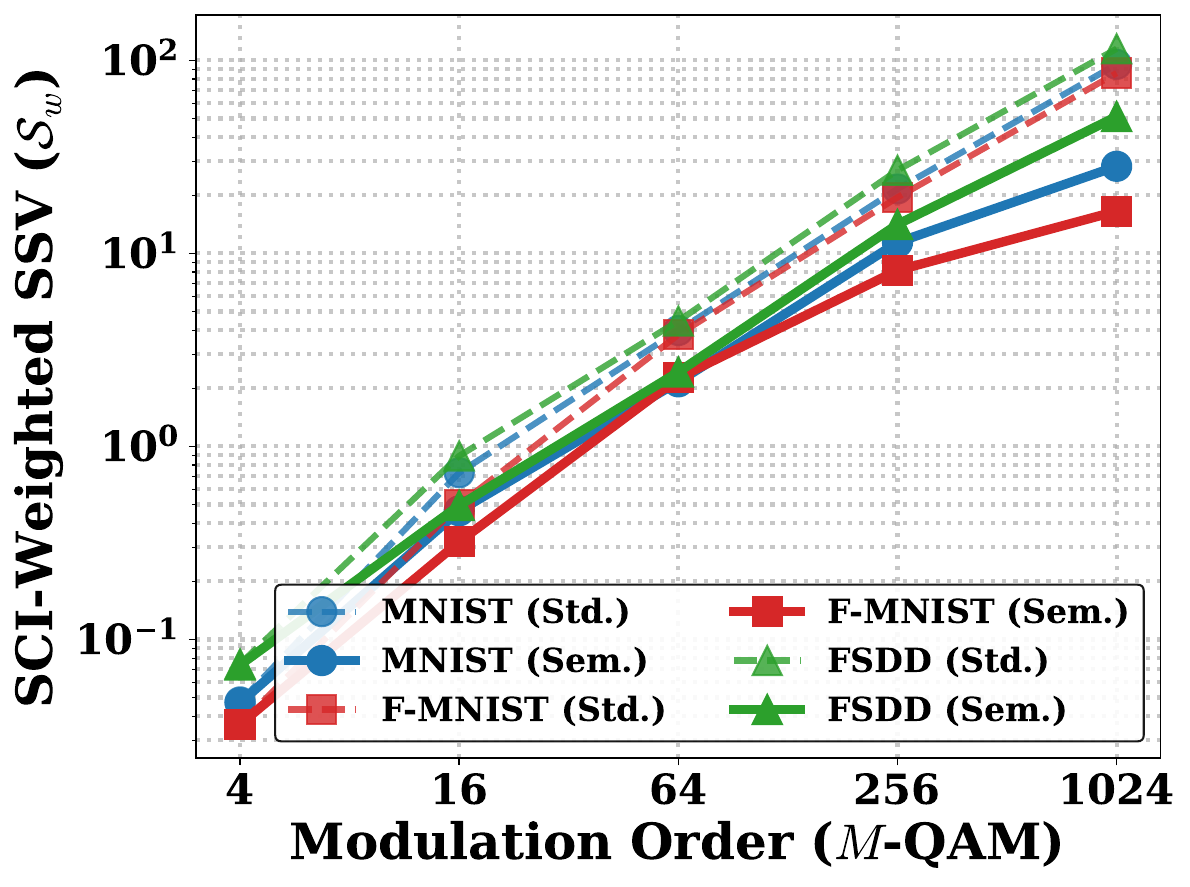}}
    \hfill
    \subfloat[\vspace{-1mm}$\mathcal{S}_p$ vs.\ $M$-QAM]{\includegraphics[width=0.48\linewidth]{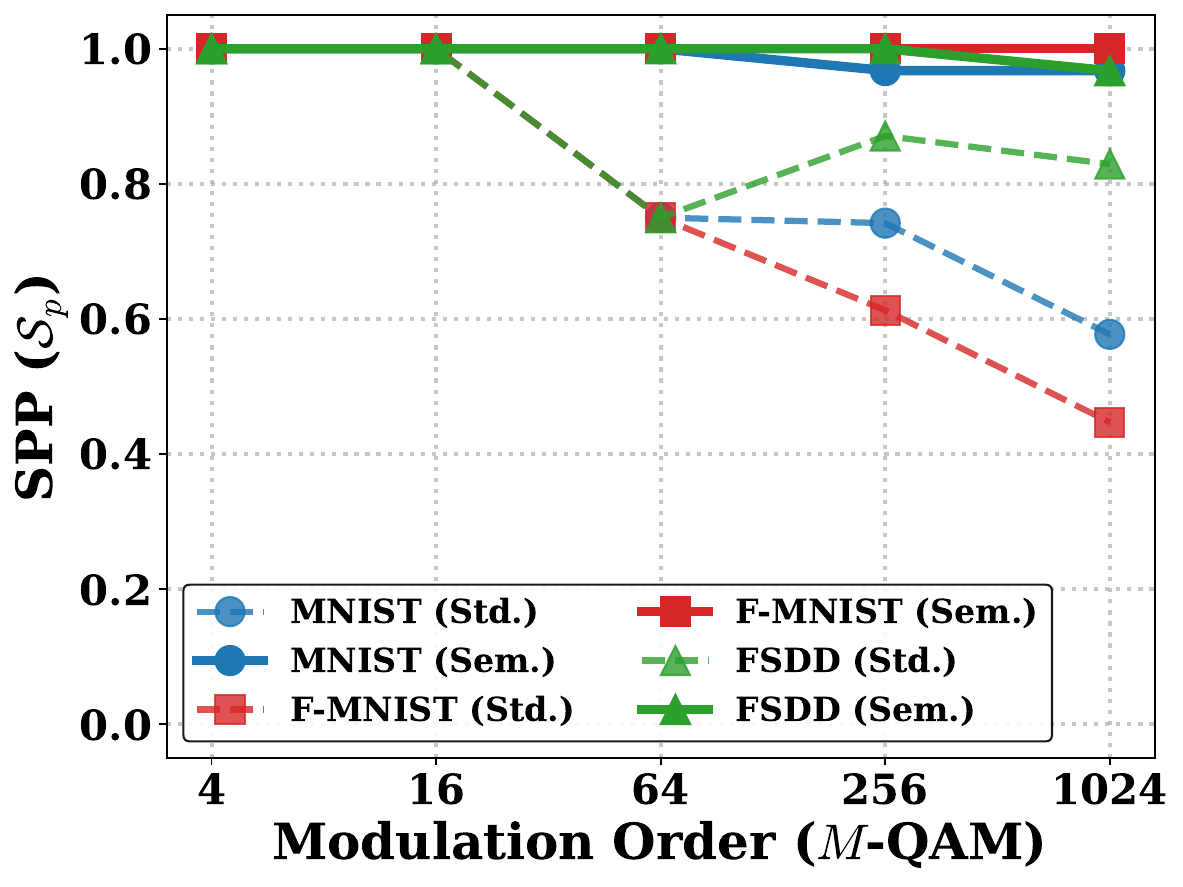}}
    \vspace{-2mm}\caption{SSV and SPP analysis across modulation orders.}
    \label{fig:cross_dataset_evaluation}
\vspace{-7mm}\end{figure}

\vspace{-1mm}\section{Conclusion}\vspace{-1mm}
In this paper, we have introduced a novel semantic QAM architecture that prioritizes the physical-layer protection of task-critical features. By leveraging a DRL-based rate controller and an SCI-weighted loss function, the proposed system natively isolates highly semantic concepts from channel noise. Empirical evaluations across visual and acoustic datasets confirm that our approach maintains near $100\%$ protection for critical symbols and achieves massive compression gains over standard baselines, even in severely degraded SNR regimes. This framework offers a robust, scalable foundation for next-generation AI-native 6G networks.

\vspace{-3mm}\bibliographystyle{IEEEbib}
\def\baselinestretch{0.84}
\bibliography{refs} 
\vspace{-1mm}
\end{document}